\documentclass{article}
\pdfoutput=1
\usepackage{iclr2021_conference,times}

% Optional math commands from https://github.com/goodfeli/dlbook_notation.
%%%%% NEW MATH DEFINITIONS %%%%%

\usepackage{amsmath,amsfonts,bm}

% Mark sections of captions for referring to divisions of figures

% Highlight a newly defined term

% Figure reference, lower-case.

% Figure reference, capital. For start of sentence

% Section reference, lower-case.

% Section reference, capital.

% Reference to two sections.

% Reference to three sections.

% Reference to an equation, lower-case.
\def\eqref#1{equation~\ref{#1}}
% Reference to an equation, upper case

% A raw reference to an equation---avoid using if possible

% Reference to a chapter, lower-case.

% Reference to an equation, upper case.

% Reference to a range of chapters

% Reference to an algorithm, lower-case.

% Reference to an algorithm, upper case.

% Reference to a part, lower case

% Reference to a part, upper case

\def\1{\bm{1}}

% Random variables

% rm is already a command, just don't name any random variables m

% Random vectors

% Elements of random vectors

% Random matrices

% Elements of random matrices

% Vectors

% Elements of vectors

% Matrix

% Tensor
\DeclareMathAlphabet{\mathsfit}{\encodingdefault}{\sfdefault}{m}{sl}
\SetMathAlphabet{\mathsfit}{bold}{\encodingdefault}{\sfdefault}{bx}{n}

% Graph

% Sets

% Don't use a set called E, because this would be the same as our symbol
% for expectation.

% Entries of a matrix

% entries of a tensor
% Same font as tensor, without \bm wrapper

% The true underlying data generating distribution

% The empirical distribution defined by the training set

% The model distribution

% Stochastic autoencoder distributions

 % Laplace distribution

% Wolfram Mathworld says $L^2$ is for function spaces and $\ell^2$ is for vectors
% But then they seem to use $L^2$ for vectors throughout the site, and so does
% wikipedia.

 % See usage in notation.tex. Chosen to match Daphne's book.

\usepackage[utf8]{inputenc} % allow utf-8 input
\usepackage[T1]{fontenc}    % use 8-bit T1 fonts
\usepackage{pdfrender}      % experimental
\usepackage{hyperref}       % hyperlinks
\usepackage{url}            % simple URL typesetting
\usepackage{booktabs}       % professional-quality tables
\usepackage{amsfonts}       % blackboard math symbols
\usepackage{amsthm}
\usepackage{amssymb}
\usepackage{mathtools}
\usepackage{derivative}     % nice partial derivatives, etc.
\usepackage{nicefrac}       % compact symbols for 1/2, etc.
\usepackage{microtype}      % microtypography
\usepackage{bm}             % bold math
\usepackage{color}
\usepackage{gnuplottex}
\usepackage{printlen}
\usepackage{tikz}
\usepackage{algpseudocode}
\usepackage{algorithm}

\newcommand{\iid}{\mathbin{\overset{\text{\tiny{iid}}}{\sim}}}

\DeclareMathOperator{\expectation}{\mathbb{E}}

\newtheorem{lemma}{Lemma}
\theoremstyle{definition}
\newtheorem{definition}{Definition}

\def\email#1{\href{mailto:#1}{#1}}

\title{Differentiable Segmentation of Sequences}

\author{Erik Scharw\"achter%
\thanks{corresponding author, e-mail: \texttt{\email{erik.scharwaechter@cs.tu-dortmund.de}}}\\
TU Dortmund University, Germany\\
\And
Jonathan Lennartz\\
University of Bonn, Germany
\AND
Emmanuel M\"uller\\
TU Dortmund University, Germany\\
}

\iclrfinalcopy
\begin{document}

\maketitle

\begin{abstract}
Segmented models are widely used to describe non-stationary sequential data
with discrete change points.
Their estimation usually requires solving a mixed discrete-continuous optimization problem,
where the segmentation is the discrete part and all other model parameters are continuous.
A number of estimation algorithms have been developed that are highly specialized for their
specific model assumptions. The dependence on non-standard algorithms makes it hard to
integrate segmented models in state-of-the-art deep learning architectures that critically
depend on gradient-based optimization techniques.
In this work, we formulate a relaxed variant of segmented models that enables joint estimation
of all model parameters, including the segmentation, with gradient descent.
We build on recent advances in learning continuous warping functions and propose
a novel family of warping functions based on the two-sided power (TSP) distribution.
TSP-based warping functions are differentiable, have simple closed-form expressions, and
can represent segmentation functions exactly.
Our formulation includes the important class of segmented generalized linear models as a
special case, which makes it highly versatile.
We use our approach to model the spread of COVID-19 with Poisson regression,
apply it on a change point detection task, and learn classification
models with concept drift.
The experiments show that our approach effectively solves all these tasks with
a standard algorithm for gradient descent.
\end{abstract}

\section{Introduction}
\label{secIntro}

Non-stationarity is a classical challenge in the analysis of sequential data.
A common source of non-stationarity is the presence of change points, where the
data-generating process switches its dynamics from one regime to another regime.
In some applications, the \emph{detection} of change points is of primary interest,
since they may indicate important events in the data
\citep{Page1954,Box1965,Basseville1986,Matteson2014,Li2015a,Arlot2019,Scharwachter2020}.
Other applications require \emph{models} for the dynamics within
each segment, which may yield more insights into the phenomenon under study and enable predictions.
A plethora of segmented models for
regression analysis \citep{McGee1970,Hawkins1976,Lerman1980,Bai2003,Muggeo2003,Acharya2016}
and time series analysis \citep{Hamilton1990a,Davis2006,Aue2013,Ding2016}
have been proposed in the literature, where the segmentation materializes either in
the data dimensions or the index set.

We adhere to the latter approach and consider models of the following form.
Let $x = (x_1, ..., x_T)$ be a sequence of $T$ observations, and let $z = (z_1, ..., z_T)$
be an additional sequence of covariates used to predict these observations.
Observations and covariates may be scalars or vector-valued.
We refer to the index $t=1,...,T$ as the \emph{time of observation}.
The data-generating process (DGP) of $x$ given $z$ is \emph{time-varying}
and follows a segmented model with $K \ll T$ segments on the time axis.
Let $\tau_k$ denote the beginning of segment $k$. We assume that
\begin{gather}
\label{eqnSegmentedModel}
x_t \mid z_t \iid f_\text{DGP}\left(z_{t}, \theta_k\right), \text{if } \tau_k \le t < \tau_{k+1},
\end{gather}
where the DGP in segment $k$ is parametrized by $\theta_k$.
This scenario is typically studied
for change point detection \citep{Truong2020,VandenBurg2020a}
and modeling of non-stationary time
series \citep{Guralnik1999,Cai1994,Kohlmorgen2001,Davis2006,Robinson2008,Saeedi2016},
but also captures classification models with concept drift \citep{Gama2013}.
and segmented generalized linear models \citep{Muggeo2003}

We express the segmentation of the time axis by a segmentation function
$\zeta : \{1,...,T\} \longrightarrow \{1,...,K\}$ that maps each time point $t$ to
a segment identifier $k$.
The segmentation function is monotonically increasing with boundary constraints
$\zeta(1)=1$ and $\zeta(T)=K$.
We denote all segment-wise parameters by $\theta = (\theta_1,...,\theta_K)$.
The ultimate goal is to find a segmentation $\zeta$ as well as segment-wise parameters $\theta$
that minimize a loss function $\mathcal{L}(\zeta,\theta)$, for example, the negative log-likelihood
of the observations $x$.
Existing approaches exploit the fact that model estimation within a segment is often
straightforward when the segmentation is known. These approaches decouple the search for an
optimal segmentation $\zeta$ algorithmically from the estimation of the segment-wise
parameters $\theta$:
\begin{equation}
\label{eqnOptimization}
\min_{\zeta, \theta} \mathcal{L}(\zeta, \theta)
= \min_{\zeta} \min_{\theta} \mathcal{L}(\zeta, \theta).
\end{equation}
Various algorithmic search strategies have been explored for the outer minimization of $\zeta$,
including grid search \citep{Lerman1980},
dynamic programming \citep{Hawkins1976,Bai2003},
hierarchical clustering \citep{McGee1970}
and other greedy algorithms \citep{Acharya2016}, some of which come with
provable optimality guarantees.
These algorithms are often tailored to a specific class of models like
piecewise linear regression, and do not generalize beyond.
Moreover, the use of non-standard optimization techniques in the outer minimization
hinders the integration of such models with deep learning architectures,
which usually perform joint optimization of all model parameters with gradient descent.

In this work, we provide a continuous and differentiable relaxation of the segmented model
from Equation~\ref{eqnSegmentedModel} that allows joint optimization of all model parameters,
including the segmentation function, using state-of-the-art gradient descent algorithms.
Our formulation is inspired by the learnable warping functions
proposed recently for sequence alignment \citep{Lohit2019,Weber2019a}.
In a nutshell, we replace the hard segmentation function $\zeta$ with a soft
warping function $\gamma$. An optimal segmentation can be found by optimizing the parameters
of the warping function.
We propose a novel class of piecewise-constant warping functions based on
the two-sided power (TSP) distribution \citep{VanDorp2002,Kotz2004}
that can represent segmentation functions exactly.
TSP-based warping functions are differentiable, have simple closed-form expressions
that allow fast evaluation, and their parameters have a one-to-one correspondence
with segment boundaries. Source codes for the model and all experiments can be found in
the online supplementary material at \url{https://github.com/diozaka/diffseg}.

\section{Relaxed segmented models}
\label{secRelaxedSegmentation}

We relax the model definition from Equation~\ref{eqnSegmentedModel} in the following way.
We assume that
\begin{gather}
\label{eqnRelaxedSegmentedModel}
x_t \mid z_t \iid f_\text{DGP}\left(z_{t}, \hat{\theta}_t\right),
\end{gather}
where we substitute the actual parameter $\theta_k$ of the DGP at time step $t$ in segment $k$
by the predictor~$\hat{\theta}_t$. The predictor $\hat{\theta}_t$ is a weighted sum over the
individual segment parameters
\begin{equation}
\label{eqnPredictorParameter}
\hat{\theta}_t := \sum_k \hat{w}_{kt} \theta_k
\end{equation}
with weights $\hat{w}_{kt} \in [0,1]$ such that $\sum_k \hat{w}_{kt}=1$ for all $t$.
The weights can be viewed as the entries of a (soft) alignment matrix that aligns every
time step $t$ to a segment identifier $k$. Given a (hard) segmentation function $\zeta$,
we can construct a (hard) alignment matrix by setting $\hat{w}_{kt} = 1$ if and only if
$\zeta(t)=k$.
In our relaxed model, we employ a continuous predictor $\hat{\zeta}_t \in [1,K]$
for the value of the segmentation function $\zeta(t)$ and let the weight $\hat{w}_{kt}$
depend on the difference between $\hat{\zeta}_t$ and $k$:
\begin{equation}
\label{eqnPredictorWeight}
\hat{w}_{kt} := \max\left(0, 1-\left|\hat{\zeta}_t-k\right|\right)
\end{equation}
The smaller the difference, the closer the alignment weight $\hat{w}_{kt}$ will be to~$1$.
If $k \le \hat{\zeta}_t \le k+1$, this choice of weights leads to a linear interpolation
of the parameters $\theta_k$ and $\theta_{k+1}$ in Equation~\ref{eqnPredictorParameter}.
Higher-order interpolations can be achieved by adapting the weight function accordingly.

The key question is how to effectively parametrize the continuous predictors $\hat{\zeta}_t$
for the segmentation function. For this purpose, we observe that the continuous analogue of
a monotonically increasing segmentation function is a warping function \citep{Ramsay1998}.
Warping functions describe monotonic alignments between closed continuous intervals.
Formally, the function $\gamma : [0,1] \longrightarrow [0,1]$ is a warping function if
it is monotonically increasing and satisfies the boundary constraints
$\gamma(0)=0$ and $\gamma(1)=1$.
Any warping function $\gamma$ can be transformed into a continuous predictor $\hat{\zeta}_{t}$
by evaluating $\gamma$ at $T$ evenly-spaced grid points on the unit interval $[0,1]$ and
rescaling the result to the domain $[1,K]$.
Let $u_t = (t-1)/(T-1)$ for $t=1,...,T$ be a unit grid. We define
\begin{gather}
\label{eqnPredictorSegmentation}
\hat{\zeta}_{t} := 1 + \gamma(u_t) \cdot (K-1).
\end{gather}
Similar transformations of warping functions have recently been applied for time series
alignment with neural networks \citep{Weber2019a}. An example segmentation function and
predictors based on several warping functions are visualized in Figure~\ref{figSegWarp}.
With the transformation from Equation~\ref{eqnPredictorSegmentation}, our relaxed segmented
model is parametrized by the segment parameters $\theta$ and the parameters of the warping
function $\gamma$ that approximates the segmentation.
The discrete-continuous optimization problem from Equation~\ref{eqnOptimization} has changed
to a continuous optimization problem that is amenable to gradient-based learning:
\begin{equation}
\label{eqnRelaxedOptimization}
\min_{\gamma,\theta} \mathcal{L}(\gamma, \theta).
\end{equation}

\begin{figure}[tbp]
\begin{center}
\begin{gnuplot}[terminal=cairolatex,terminaloptions={pdf size 4.0,1.0}]
set format y '\scriptsize{%g}'
set format y2 '\scriptsize{%g}'
set samples 1000

set tmargin 0
set bmargin screen 0.25
set lmargin screen 0.10
set rmargin screen 0.80
set key at screen 0.90, 0.95 left reverse Left samplen 2 spacing 1.0

set xrange [0:1]
set xtics 0, 0.1 scale 0
set format x ''
set grid xtics front
set label '\scriptsize{$t=$}'   at first -0.1, graph -0.10 right front
set label '\scriptsize{$u_t=$}' at first -0.1, graph -0.25 right front
set label '\scriptsize{1}'   at first  0.0, graph -0.10 center front
set label '\scriptsize{2}'   at first  0.1, graph -0.10 center front
set label '\scriptsize{3}'   at first  0.2, graph -0.10 center front
set label '\scriptsize{4}'   at first  0.3, graph -0.10 center front
set label '\scriptsize{5}'   at first  0.4, graph -0.10 center front
set label '\scriptsize{6}'   at first  0.5, graph -0.10 center front
set label '\scriptsize{7}'   at first  0.6, graph -0.10 center front
set label '\scriptsize{8}'   at first  0.7, graph -0.10 center front
set label '\scriptsize{9}'   at first  0.8, graph -0.10 center front
set label '\scriptsize{10}'  at first  0.9, graph -0.10 center front
set label '\scriptsize{11}'  at first  1.0, graph -0.10 center front
set label '\scriptsize{0  }' at first  0.0, graph -0.25 center front
set label '\scriptsize{0.1}' at first  0.1, graph -0.25 center front
set label '\scriptsize{0.2}' at first  0.2, graph -0.25 center front
set label '\scriptsize{0.3}' at first  0.3, graph -0.25 center front
set label '\scriptsize{0.4}' at first  0.4, graph -0.25 center front
set label '\scriptsize{0.5}' at first  0.5, graph -0.25 center front
set label '\scriptsize{0.6}' at first  0.6, graph -0.25 center front
set label '\scriptsize{0.7}' at first  0.7, graph -0.25 center front
set label '\scriptsize{0.8}' at first  0.8, graph -0.25 center front
set label '\scriptsize{0.9}' at first  0.9, graph -0.25 center front
set label '\scriptsize{1  }' at first  1.0, graph -0.25 center front

set yrange [0.9:3.1]
set ytics 1, 1 scale 0.5 nomirror
set y2range [-0.05:1.05]
set y2tics 0, 0.5, 1 scale 0.5 nomirror

# 5-class qualitative Dark2
set linestyle 1 lw 4 lc rgb '#d95f02'
set linestyle 2 lw 2 lc rgb '#1b9e77'
set linestyle 3 lw 2 lc rgb '#7570b3'
set linestyle 4 lw 2 lc rgb '#e7298a'

pow(b,x) = exp(x*log(b))
pdf(x,a,m,b,n) = ((x<a)?0:((x<m)?n/(b-a)*pow((x-a)/(m-a),n-1):((x<b)?n/(b-a)*pow((b-x)/(b-m),n-1):0)))
cdf(x,a,m,b,n) = ((x<a)?0:((x<m)?(m-a)/(b-a)*pow((x-a)/(m-a),n):((x<b)?1-(b-m)/(b-a)*pow((b-x)/(b-m),n):1)))

step1(x,s1,s2) = ((x<s1)?1:1/0)
step2(x,s1,s2) = ((x<s1)?1/0:((x<s2)?2:1/0))
step3(x,s1,s2) = ((x<s2)?1/0:3)
tsp(x,s1,s2,w,n) = 1./2.*(cdf(x,s1-w/2,s1,s1+w/2,n)+cdf(x,s2-w/2,s2,s2+w/2,n))

set style rect fc lt -1 fs transparent solid 0.10 noborder
set obj rect from 0.00, graph 0 to 0.25, graph 1
set obj rect from 0.00, graph 0 to 0.20, graph 1
set obj rect from 0.35, graph 0 to 0.55, graph 1
set obj rect from 0.40, graph 0 to 0.50, graph 1
set obj rect from 0.65, graph 0 to 1.00, graph 1
set obj rect from 0.70, graph 0 to 1.00, graph 1

plot tsp(x,0.3,0.6,0.3,8) axes x1y2 ls 2 title '\scriptsize{$\gamma_1$}', \
     tsp(x,0.3,0.6,0.2,1) axes x1y2 ls 3 title '\scriptsize{$\gamma_2$}', \
     "data/cpab-warp.dat" u 1:2 w l axes x1y2 ls 4 title '\scriptsize{$\gamma_3$}', \
     "<echo '0.0 1'" with points ls 1 ps 0.5 pt 7 title '\scriptsize{$\zeta$}', \
     "<echo '0.1 1'" with points ls 1 ps 0.5 pt 7 notitle, \
     "<echo '0.2 1'" with points ls 1 ps 0.5 pt 7 notitle, \
     "<echo '0.3 2'" with points ls 1 ps 0.5 pt 7 notitle, \
     "<echo '0.4 2'" with points ls 1 ps 0.5 pt 7 notitle, \
     "<echo '0.5 2'" with points ls 1 ps 0.5 pt 7 notitle, \
     "<echo '0.6 2'" with points ls 1 ps 0.5 pt 7 notitle, \
     "<echo '0.7 3'" with points ls 1 ps 0.5 pt 7 notitle, \
     "<echo '0.8 3'" with points ls 1 ps 0.5 pt 7 notitle, \
     "<echo '0.9 3'" with points ls 1 ps 0.5 pt 7 notitle, \
     "<echo '1.0 3'" with points ls 1 ps 0.5 pt 7 notitle

\end{gnuplot}
\end{center}
\caption{Example segmentation function $\zeta(t)$ and warping functions $\gamma_i(u)$.
The shaded regions are piecewise constant in $\gamma_1$ and $\gamma_2$, respectively;
$\gamma_3$ is strictly increasing.}
\label{figSegWarp}
\end{figure}

Figure~\ref{figSegWarp} illustrates that the ideal warping function for segmentation
is piecewise constant---a formal argument is provided in Appendix~\ref{secAppendixExactWarp}.
Several families of warping functions have been proposed in the literature
and can be employed within our model
\citep{Aikawa1991a,Ramsay1998,Gervini2004,Gaffney2004a,Claeskens2010,Freifeld2015,
Detlefsen2018,Weber2019a,Kurtek2011,Lohit2019}.
However, none of them contains piecewise-constant functions, and many of them are
more expressive than necessary for the segmentation task.
Below, we define a novel family of piecewise-constant warping functions that is tailored
specifically for the segmentation task, with only one parameter per change point.

\section{TSP-based warping functions}
\label{secTSP}

Warping functions are similar to cumulative distribution functions (cdfs)
for random variables \citep{Lohit2019}. Cdfs are monotonically increasing,
right-continuous, and normalized over their domain \citep{Wasserman2004}.
If their support is bounded to $[0,1]$, they satisfy the same boundary constraints
as warping functions.
Therefore, we can exploit the vast literature on statistical distributions to define
and characterize families of warping functions.
Our family of warping functions is based on the two-sided power (TSP)
distribution \citep{VanDorp2002,Kotz2004}.

\subsection{Background: Two-sided power distribution}
The TSP distribution has been proposed recently to model continuous random variables with
bounded support $[a,b] \subset \mathbb{R}$.
It generalizes the triangular distribution and can be viewed as a peaked alternative to the
beta distribution \citep{Johnson1994a}.
In its most illustrative form, its probability density function (pdf) is unimodal with
power-law decay on both sides, but it can yield U-shaped and J-shaped pdfs as well, depending
on the parametrization. Formally, the pdf is given by
\begin{equation}
\label{eqnTSPpdf}
f_\text{TSP}(u ; a, m, b, n) = \begin{cases}
\frac{n}{b-a}\left(\frac{u-a}{m-a}\right)^{n-1}, & \text{for } a < u \le m \\
\frac{n}{b-a}\left(\frac{b-u}{b-m}\right)^{n-1}, & \text{for } m \le u < b \\
0, & \text{elsewhere},
\end{cases}
\end{equation}
with $a \le m \le b$. $a$ and $b$ define the boundaries of the support,
$m$ is the mode (anti-mode) of the distribution, and $n > 0$ is the power parameter that
tapers the distribution. The triangular distribution is the special case with $n=2$.
In the following, we restrict our attention to the unimodal regime with $a < m < b$ and $n > 1$.
In this case, the cdf is given by
\begin{equation}
F_\text{TSP}(u ; a, m, b, n) = \begin{cases}
0, & \text{for } u \le a \\
\frac{m-a}{b-a}\left(\frac{u-a}{m-a}\right)^{n}, & \text{for } a \le u \le m \\
1-\frac{b-m}{b-a}\left(\frac{b-u}{b-m}\right)^{n}, & \text{for } m \le u \le b \\
1, & \text{for } b \le u.
\end{cases}
\end{equation}
For convenience, we introduce a three-parameter variant of the TSP distribution with
support restricted to \emph{subintervals} of $[0,1]$ located around the mode.
It is fully specified by the mode $m \in (0,1)$, the width $w \in (0,1]$ of the subinterval,
and the power $n > 1$.
Depending on the mode and the width, the distribution is symmetric or asymmetric.
Illustrations of the pdf and cdf of the three-parameter
TSP distribution for various parametrizations can be found in Figure~\ref{figTSP}.
We denote the three-parameter TSP distribution as $\operatorname{TSP}(m,w,n)$ and
write $f_\text{TSP}(u ; m, w, n)$ and $F_\text{TSP}(u ; m, w, n)$ for its pdf and cdf, respectively.
The original parameters $a$ and $b$ are obtained from $m$ and $w$ via
\begin{align}
a &{}= \max\left(0, \min\left(1-w, m-\frac{w}{2}\right)\right),\\
b &{}= \min(1, a+w),
\end{align}
and yield a unimodal regime.
Intuitively, the three-parameter TSP distribution describes a symmetric two-sided power kernel
of window size $w$ that is located at $m$ and becomes asymmetric only if a symmetric window would
exceed the domain $[0,1]$.
An advantage of the TSP distribution over the beta distribution is that its pdf and cdf
have closed form expressions that are easy to evaluate computationally.
Moreover, they are differentiable almost everywhere with respect to all parameters.

\begin{figure}[bt]
\begin{center}
\begin{gnuplot}[terminal=cairolatex,terminaloptions={pdf size 5.5,2.0}]
set multiplot layout 2,5 columnsfirst
set lmargin 1
set rmargin 1

set xrange [0:1]
set xtics 0, 0.2 scale 0.5
set grid front # hack for xtics front
unset grid
unset ytics
unset key

set linestyle 1 lw 4 lc rgb '#7b3294'
set linestyle 2 lw 4 lc rgb '#008837'
set style rect fc lt -1 fs solid 0.15 noborder
set style arrow 1 nohead dt 2
set style arrow 2 heads dt 1 size 0.05,45

pow(b,x) = exp(x*log(b))
pdf(x,a,m,b,n) = ((x<a)?0:((x<m)?n/(b-a)*pow((x-a)/(m-a),n-1):((x<b)?n/(b-a)*pow((b-x)/(b-m),n-1):0)))
cdf(x,a,m,b,n) = ((x<a)?0:((x<m)?(m-a)/(b-a)*pow((x-a)/(m-a),n):((x<b)?1-(b-m)/(b-a)*pow((b-x)/(b-m),n):1)))

array ms = [0.5, 0.5, 0.5, 0.70, 0.9]
array ws = [1,   1,   0.5, 0.5,  0.5]
array ns = [2,   4,     4,   4,  4]

do for [i=1:|ms|] {
  m=ms[i]; w=ws[i]; n=ns[i]
  a=((1-w<m-w/2)?1-w:m-w/2); a=((a>0)?a:0); b=((1<a+w)?1:a+w)
  set title sprintf('\small{$\operatorname{TSP}(%g, %g, %g)$}', m, w, n)
  set obj rect from 0, graph 0 to a, graph 1
  set obj rect from b, graph 0 to 1, graph 1
  set arrow from m, graph 0 to m, graph 1 as 1
  if (i==1) {
    set label '\tiny{mode}' at m+0.05, graph 0.20
  }
  set tmargin screen 0.8; set bmargin screen 0.5
  set format x ''
  plot pdf(x,a,m,b,n) ls 1
  unset title
  unset arrow
  unset label
  set tmargin screen 0.45; set bmargin screen 0.15
  set format x '\tiny{%g}'
  set arrow from a, graph 0.5 to b, graph 0.5 as 2
  if (i==1) {
    set label '\tiny{width}' at a+0.180, graph 0.610
  }
  plot cdf(x,a,m,b,n) ls 2
  unset obj
  unset arrow
  unset label
}
\end{gnuplot}
\end{center}
\caption{Three-parameter variant of the two-sided power distribution
$\operatorname{TSP}(m,w,n)$ on the interval $[0,1]$.
Dashed lines denote the modes $m$, arrows the widths $w$; shaded regions have probability zero.
\emph{Top row:} probability density function.
\emph{Bottom row:} cumulative distribution function.
}
\label{figTSP}
\end{figure}

\subsection{Warping with mixtures of TSP distributions}
We define the TSP-based warping function $\gamma_\text{TSP} : [0,1] \longrightarrow [0,1]$
for a fixed number of segments $K$ as a \emph{mixture distribution} of $K-1$~three-parameter
TSP distributions.
Mixtures of unimodal distributions have step-like cdfs that approximate segmentation functions.
We use uniform mixture weights, and treat the width~$w$ and power~$n$ of the TSP component
distributions as fixed hyperparameters.
The components differ only in their modes $m=(m_1,...,m_{K-1})$ with $m_k \in (0,1)$:
\begin{gather}
\gamma_\text{TSP}(u ; m) := \frac{1}{K-1} \sum_k F_\text{TSP}(u ; m_k, w, n).
\end{gather}
We constrain the modes to be strictly increasing, so that $\gamma_\text{TSP}$ is identifiable.
If the windows around two consecutive modes $m_{k-1}$ and $m_{k}$ are non-overlapping,
then $\gamma_\text{TSP}(u;m)=\frac{k-1}{K-1}$ between these windows. Furthermore,
$\gamma_\text{TSP}(u;m)=0$ before the first window and $\gamma_\text{TSP}(u;m)=1$ after
the last window.
Therefore, the family of TSP-based warping functions contains piecewise-constant functions.
The functions $\gamma_1$ and $\gamma_2$ in Figure~\ref{figSegWarp} are examples of
TSP-based warping functions. We show in Appendix~\ref{secAppendixExactWarp} that
any segmentation function can be represented by a TSP-based warping function.

\section{Model architecture}

We have described all components of the relaxed segmented model architecture.
It can use any family of warping functions to approximate a segmentation function.
An overview of the complete model architecture with \emph{TSP-based warping functions}
is provided in Table~\ref{tblFullModel}.
To simplify the estimation problem, we rewrite the TSP modes as a normalized
cumulative sum,
\begin{gather}
m_k := \frac{\sum_{k' \le k}{\exp(\mu_{k'})}}{\sum_{k'=1}^{K}{\exp(\mu_{k'})}}
	\quad \text{for } k=1,...,K-1
\end{gather}
with unconstrained real parameters $\mu=(\mu_1,...,\mu_K)$.
The transformation of the parameters guarantees that the modes $m=(m_1,...,m_{K-1})$ are
strictly increasing and come from the interval $(0,1)$.
The warping function is now overparametrized, since the transformation is invariant to
additive terms in the parameters $\mu$.
This issue can be resolved by enforcing $\mu_1:=0$.

\begin{table}[tb]
\caption{The relaxed segmented model with TSP-based warping functions.}
\label{tblFullModel}
\begin{center}
\begin{tabular}{l c r}
\toprule
data generating process & $\displaystyle x_t \mid z_t \iid f_\text{DGP}\left(z_{t}, \hat{\theta}_t\right)$ & $t=1,...,T$\\[9pt]
parameter predictors & $\displaystyle \hat{\theta}_t := \sum_k \theta_k \max\left(0, 1-\left|\hat{\zeta}_t-k\right|\right)$ & $t=1,...,T$ \\[9pt]
segmentation predictors & $\displaystyle \hat{\zeta}_{t} := 1 + \gamma_\text{TSP}\left(\frac{t-1}{T-1}; m\right) \cdot (K-1)$ & $t=1,...,T$ \\[12pt]
TSP mode predictors & $\displaystyle m_k := \frac{\sum_{k' \le k}{\exp(\mu_{k'})}}{\sum_{k'=1}^{K}{\exp(\mu_{k'})}}$ & $k=1,...,K-1$ \\
\midrule
segment parameters & $\theta_k$ & $k=1,...,K$ \\
TSP parameters     & $\mu_k$    & $k=1,...,K$ \\
\bottomrule
\end{tabular}
\end{center}
\end{table}

The learnable parameters of this architecture are $\theta=(\theta_1,...,\theta_K)$ for the DGP
and $\mu=(\mu_1,...,\mu_K)$ for the warping function.
The hyperparameters are the number of segments $1 < K \ll T$, and the window size
$w \in (0,1]$ and power $n > 1$ of the TSP distributions.
This architecture is a concatenation of simple functions that are either fully differentiable
or differentiable almost everywhere.
Therefore, all parameters can be learned jointly using gradient descent.
A hard segmentation $\zeta$ of the input sequence can be obtained at any time during or after
training by rounding $\hat{\zeta}_t$ to the nearest integers.

\section{Experiments}

Our relaxed segmented model is highly versatile and can be employed for many different tasks.
In Section~\ref{secCOVID19}, we study the performance of our approach in estimating a
segmented generalized linear model (GLM) \citep{Muggeo2003} on COVID-19 data.
In Section~\ref{secChangePoints}, we evaluate our approach on a change point detection
benchmark against various competitors. In Section~\ref{secConceptDrift}, we apply it on
a streaming classification benchmark with concept drift.
At last, Section~\ref{secPhonemes} illustrates potential future applications of our model
for discrete representation learning.
We implemented our model in Python\footnote{\url{https://python.org/}} using
PyTorch\footnote{\url{https://pytorch.org/}}
and optimize the parameters with \textsc{Adam} \citep{Kingma2015}.
We employ three different families of warping functions in our relaxed model:
nonparametric (NP) \citep{Lohit2019}, CPA-based (CPAb) \citep{Weber2019a}, and our
TSP-based functions (TSPb).
Source codes can be found in the supplementary material.

\subsection{Poisson regression}
\label{secCOVID19}

Recent work has applied segmented Poisson regression to model
COVID-19 case numbers \citep{Kuchenhoff2020,Muggeo2020}.
We follow \citet{Kuchenhoff2020} and model daily time series of \emph{newly reported}
COVID-19 cases during the first wave of the pandemic.
We obtained official data for Germany from
Robert Koch Institute.\footnote{\url{https://www.rki.de/}}
A visualization of the reported data in Figure~\ref{figCOVID19} (right plot, bars)
reveals non-stationary growth rates and weekly periodicity.
Therefore, we use time and a day-of-week indicator as covariates in the model.
We tie the coefficients for the day-of-week indicators across all segments, while the daily
growth rates and the bias terms differ in every segment.
We estimate a standard segmented Poisson regression model with the baseline algorithm
by \citet{Muggeo2003}, and our relaxed models (TSPb, CPAb, NP) with gradient descent.
We estimate the baseline model and the relaxed models with $K = 2$, $4$, $8$, $16$, $32$,
and $64$ segments. The true number of segments is unknown in this task.
More details can be found in Appendix~\ref{secAppendixCOVID}.

\begin{figure}[tbp]
\begin{center}
\begin{gnuplot}[terminal=cairolatex,terminaloptions={pdf size 5.0,1.25}]
set multiplot layout 1,2
set tics scale 0.5
set format y '\tiny{%g}'

### LEFT PART ###

set tmargin screen 0.99
set bmargin screen 0.20
set lmargin screen 0.10
set rmargin screen 0.35

set format x '\tiny{%g}'
set xtics offset 0,0.5
set xlabel '\scriptsize{number of segments $K$}' offset 0,1
set ylabel '\scriptsize{log-likelihood}' offset 3
set logscale x 2
set key samplen 2 spacing 0.6 bottom right

plot -393.7035 lw 3 lc rgb '#000000' dt 2 title '\tiny{saturated model}', \
   'data/covid19-by-K-pivot.dat' u 1:7 w l lw 5 lc rgb '#000000' title '\tiny{Muggeo (2003)}', \
   'data/covid19-by-K-new.dat' every :::2::2 u 1:2 w lp ls 2 lw 1 ps 0.5 title '\tiny{CPAb}', \
   'data/covid19-by-K-new.dat' every :::1::1 u 1:2 w lp ls 6 lw 1 ps 0.5 title '\tiny{NP}', \
   'data/covid19-by-K-new.dat' every :::0::0 u 1:2 w lp ls 4 lw 1 ps 0.5 title '\tiny{TSPb}'

unset logscale x
unset xlabel

### RIGHT PART ###

set tmargin screen 0.99
set bmargin screen 0.10
set lmargin screen 0.5
set rmargin screen 0.99

set xdata time
set style fill solid
set boxwidth 0.4 relative
set ylabel '\scriptsize{new cases}' offset 2.5,0
set key top right opaque width -5
set bmargin screen 0.2

array cps[4] = ['2020-03-16', '2020-03-29', '2020-04-09', '2020-04-28']

set timefmt '%Y-%m-%d %H'
set arrow from '2020-03-15 12', graph 0 to '2020-03-15 12', graph 1 nohead
set arrow from '2020-03-30 12', graph 0 to '2020-03-30 12', graph 1 nohead
set arrow from '2020-05-04 12', graph 0 to '2020-05-04 12', graph 1 nohead

set timefmt '%Y-%m-%d'
set format x '\tiny{%b %d}'
set xrange ['2020-02-23':'2020-05-25']
set xtics '2020-02-24', (60*60*24*7), '2020-05-24' out nomirror rotate by -45 offset 0,0
plot 'data/covid19.dat' u 1:2 w boxes lc '#b3cde3' title '\tiny{reported}', \
     'data/covid19.dat' u 1:3 w l lw 3 lc '#8c96c6' title '\tiny{prediction}', \
     'data/covid19.dat' u 1:4 w l lw 3 lc '#810f7c' title '\tiny{average dow}'
show arrow
\end{gnuplot}
\end{center}
\caption{Segmented Poisson regression results on COVID-19 case numbers in Germany.}
\label{figCOVID19}
\end{figure}

Figure~\ref{figCOVID19} (left plot) shows the goodness-of-fit (log-likelihood) of all models.
TSPb consistently reaches the performance of the baseline algorithm. Moreover,
TSPb consistently outperforms the other families of warping functions in this experiment.
The improvement over NP is particularly large, which indicates that the ``nonparametric'' warping
functions of \citet{Lohit2019} are harder to train than the parametric families.
CPAb achieves a performance similar to TSPb, but the training time is much longer due to the
more complex mathematical operations involved.

We observe that the goodness-of-fit generally grows with the number of segments $K$ and
approaches the performance of a saturated model (one Poisson distribution per data point).
For $K > 8$, the baseline of \citet{Muggeo2003} terminates without a model estimate.
Figure~\ref{figCOVID19} (right plot) visualizes the best TSPb model for $K=4$ (blue line).
We also provide smoothed predictions where the average day of week (dow) effect is
incorporated into the bias term to highlight the change of the growth rate from segment
to segment (purple line).
The three change points are located at 2020-03-16, 2020-03-31, and 2020-05-05.
The baseline algorithm of \citet{Muggeo2003} detects consistent change points at
2020-03-16 ($\pm 0$ days), 2020-03-30 ($-1$ day), and 2020-05-01 ($-4$ days).
Since the reported data is not iid within a segment (it is only conditionally iid given
the covariates), other algorithms for change point detection cannot be applied as competitors
on this task.
Overall, the experiment shows that our model architecture allows effective training
of segmented generalized linear models using gradient descent, in particular, when employed
with TSPb warping functions.

\subsection{Change point detection}
\label{secChangePoints}

In the next experiment, we evaluate our relaxed segmented model on a change point detection
task with simulated data. Our experimental design exactly follows \citet{Arlot2019}.
All details are given in Appendix~\ref{secAppendixChangePoints}.
We sample random sequences of length $T=1000$ with 10~change points at
predefined locations.
For every segment in every sequence, a distribution is chosen randomly from a set of
predefined distributions, and observations within that segment are sampled independently
from that distribution.
We follow scenario~1 of \citet{Arlot2019}, where all predefined distributions have different
means and/or variances. An example is shown in Figure~\ref{figChangePoints} (left).
We sample $N=500$ such sequences, with change points at the same locations across all samples.

\begin{figure}[tbp]
\begin{center}
\begin{gnuplot}[terminal=cairolatex,terminaloptions={pdf size 5.0,0.75}]
# 5-class sequential BuPu
set multiplot layout 1,2

unset key
set style line 1 lc '#810f7c' pt 6 ps 0.25
set style line 2 lc 'red' pt 2 ps 0.75
set format x '\tiny{%g}'
set format y '\tiny{%g}'
set ytics 0, 5 scale 0.5
set xtics 0, 100 scale 0.5 nomirror offset 0,0.50
set lmargin 2.0
set rmargin 2.0

set bmargin screen 0.1
set tmargin screen 0.80

# change points
set arrow from 100, graph 0 to 100, graph 1 nohead
set arrow from 130, graph 0 to 130, graph 1 nohead
set arrow from 220, graph 0 to 220, graph 1 nohead
set arrow from 320, graph 0 to 320, graph 1 nohead
set arrow from 370, graph 0 to 370, graph 1 nohead
set arrow from 520, graph 0 to 520, graph 1 nohead
set arrow from 620, graph 0 to 620, graph 1 nohead
set arrow from 740, graph 0 to 740, graph 1 nohead
set arrow from 790, graph 0 to 790, graph 1 nohead
set arrow from 870, graph 0 to 870, graph 1 nohead

set title '\tiny{random sequence (scenario 1)}' offset 0,-0.6
plot 'data/arlot2019-s1-sample.dat' u 1 w p ls 1 notitle

set title '\tiny{TSPb detection frequencies over 500 random sequences}' offset 0,-0.6
set ytics 0, 20 scale 0.5
plot 'data/arlot2019-s1-sample.dat' u 2 w l ls 2 notitle

\end{gnuplot}
\end{center}
\caption{Change point detection task of \citet{Arlot2019} with
detection results from our model.}
\label{figChangePoints}
\end{figure}

\begin{table}[tbp]
\caption{Empirical change point detection results.}
\label{tblChangePoints}
\begin{center}
\scriptsize
\begin{tabular}{l c c c c l}
\toprule
algorithm & sensitivity & \#CPs (mean$\pm$std) & $d_\text{hdf}$ (mean$\pm$std) & $d_\text{fro}$ (mean$\pm$std) & reference\\
\midrule
random &          - & $10$            & $127.8 \pm 45.5$ & $3.3 \pm 0.2$ & \emph{baseline} \\
\midrule
DP$\ge$1  & $\mu\sigma$ & $10$            & $753.3 \pm 120.8$ & $4.1 \pm 0.3$ & \citet{Truong2020} \\
DP$\ge$10 & $\mu\sigma$ & $10$            & $123.6 \pm 170.7$ & $\bm{2.1 \pm 0.7}$ & \citet{Truong2020} \\
BinSeg & $\mu\sigma$ & $10.0 \pm 3.1$  & $122.0 \pm 97.7$ & $2.5 \pm 0.4$ & \citet{Scott1974} \\
PELT   & $\mu\sigma$ & $86.2 \pm 26.7$ & $100.0 \pm 26.9$ & $8.7 \pm 1.5$ & \citet{Killick2012} \\
NP     & $\mu\sigma$ & $10$            &  $88.4 \pm 35.7$ & $2.3 \pm 0.4$ & \emph{this work} \\
CPAb   & $\mu\sigma$ & $10$            &  $88.4 \pm 32.3$ & $2.7 \pm 0.3$ & \emph{this work} \\
TSPb   & $\mu\sigma$ & $10$            &  $\bm{82.5 \pm 32.3}$ & $2.3 \pm 0.3$ & \emph{this work} \\
\midrule
E-Divisive & * & $5.9 \pm 1.9$ & $162.0\pm108.2$ & $2.2 \pm 0.4$ & \citet{Matteson2014} \\
KCP        & * & $8.6 \pm 1.4$ & $67.3 \pm 55.1$ & $1.4 \pm 0.5$ & \citet{Arlot2019} \\
KCpE       & * & $10$          & $\bm{33.8 \pm 37.6}$ & $\bm{1.2 \pm 0.6}$ & \citet{Harchaoui2007} \\
\midrule
\multicolumn{6}{l}{\tiny{sensitivity: $\mu\sigma$ = mean/variance only, * = full distribution.}} \\
\multicolumn{6}{l}{\tiny{\#CPs: number of change points; if equal to 10, \#CPs is a parameter, otherwise, it is inferred automatically.}} \\
\bottomrule
\end{tabular}
\end{center}
\end{table}

We apply our relaxed segmented model to estimate the change points.
We model the data generating process by a normal distribution with different means and
variances in every segment.
This design choice makes our approach sensitive \emph{only} to changes in the means and
variances of the observed data, and no other distributional characteristics.
We fit our model individually to all $N$ sequences to obtain individual estimates for
the change points.
Figure~\ref{figChangePoints} (right) shows how many times a specific time step was detected
as a change point by our approach (with TSPb warping functions).
We observe clear peaks at the correct change point locations, which indicates that our model
successfully recovers the original segmentations in many runs.

Table~\ref{tblChangePoints} summarizes the empirical detection performance of our approach
(TSPb, CPAb and NP) and various competitors, including a baseline where change points are
drawn randomly without replacement.
It shows the Hausdorff distance $d_\text{hdf}$ and Frobenius distance $d_\text{fro}$
between the true segmentations and the detected segmentations (the lower the better).
Among the approaches that detect changes in the mean and variance,
our model performs best in terms of $d_\text{hdf}$ and second best in terms of $d_\text{fro}$,
regardless of the choice of warping function.
Note that all approaches in this group optimize the same objective function (the likelihood of
the data under a normal distribution). The dynamic programming approaches
(DP$\ge$$\ell$) \citep{Truong2020} exactly find the optimal solution for a predefined number
of change points, with a minimum segment length of $\ell$.
PELT \citep{Killick2012} finds the optimal solution with an arbitrary number of change points,
while BinSeg \citep{Scott1974} approximates that solution.
Although our gradient-based method finds suboptimal solutions in terms of the normal likelihood,
it produces results with the lowest segmentation costs. This indicates a regularizing effect
of the relaxed segmented model that avoids degenerate segmentations.
Our approach is only outperformed by algorithms for kernel-based change point detection
\citep{Harchaoui2007,Arlot2019} that are sensitive towards all distributional characteristics.

\subsection{Concept drift}
\label{secConceptDrift}

A key novelty of our segmented model architecture is that it allows joint training of the
segmentation \emph{and any other model component} using gradient descent.
As a proof of concept, we apply our model on a classification problem with concept drift
\citep{Gama2013}. The model has to learn the points in time when the target concepts change,
and a useful feature transformation for the task.
We use the insect stream benchmark of \citet{Souza2020} for this purpose.
The task is to classify insects into 6~different species using 33~features collected
from an optical sensor. The challenge is that these species behave differently when the
air temperature (which is not included as a feature) changes.
The benchmark contains multiple data streams, where the air temperature is controlled in
different ways (incremental, abrupt, incremental-gradual, incremental-abrupt-reoccurring,
incremental-reoccurring).
The classifier must adapt the learned concepts depending on the current air temperature.

We employ our relaxed segmented model for softmax regression with the cross entropy loss
to obtain segmentations of the data streams. We focus on the five data streams with balanced
classes from the benchmark, and measure performance by the classification accuracy.
We fit models with $K = 2, 4, 8, ..., 128$ segments using TSPb warping functions.
We transform the input instances with a fully connected layer followed by a ReLU nonlinearity
before passing them to the segmented model. The feature transformation is shared across
all segments. We jointly learn the parameters of the segmented model and the feature transformation
with gradient descent. Details can be found in Appendix~\ref{secAppendixConceptDrift}.
Results are visualized in Figure~\ref{figConceptDriftInsects}. In addition to our own results,
we include the prequential accuracies of the strongest competitors reported by
\citet{Souza2020}:
Leveraging Bagging (LB) \citep{Bifet2010} and Adaptive Random Forests (ARF) \citep{Gomes2017}.

\begin{figure}[tbp]
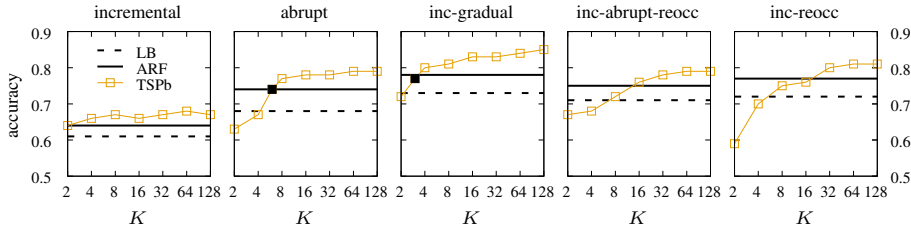

\begin{center}
\begin{gnuplot}[terminal=cairolatex,terminaloptions={pdf size 5.0,1.10}]
set multiplot layout 1,5
set tics scale 0.5
set format '\tiny{%g}'

set tmargin screen 0.90
set bmargin screen 0.20

set ylabel '\scriptsize{accuracy}' offset 3
set key samplen 1 spacing 0.6 top left Left reverse
set xtics offset 0,0.5
set yrange [0.5:0.9]
set ytics 0.5, 0.1

set lmargin screen 0.10
set rmargin screen 0.25

set title '\scriptsize{incremental}' offset 0,-0.75
set xlabel '\scriptsize{$K$}' offset 0,1
set logscale x 2
plot .61 lw 3 lc rgb '#000000' dt 2 title '\tiny{LB}', \
     .64 lw 3 lc rgb '#000000' dt 1 title '\tiny{ARF}', \
     'data/insects.dat' every :::0::0 u 1:2 w lp ls 4 lw 1 ps 0.5 title '\tiny{TSPb}'
unset ylabel
set format y ''
unset key

set lmargin screen 0.275
set rmargin screen 0.425

set title '\scriptsize{abrupt}' offset 0,-0.75
plot .68 lw 3 lc rgb '#000000' dt 2 title '\tiny{LB}', \
     .74 lw 3 lc rgb '#000000' dt 1 title '\tiny{ARF}', \
     'data/insects.dat' every :::1::1 u 1:2 w lp ls 4 lw 1 ps 0.5 title '\tiny{TSPb}', \
     'data/insects.dat' every ::2:1:2:1 u 1:2 w lp ls 5 lc 'black' lw 1 ps 0.5 notitle

set lmargin screen 0.45
set rmargin screen 0.60

set title '\scriptsize{inc-gradual}' offset 0,-0.75
plot .73 lw 3 lc rgb '#000000' dt 2 title '\tiny{LB}', \
     .78 lw 3 lc rgb '#000000' dt 1 title '\tiny{ARF}', \
     'data/insects.dat' every :::2::2 u 1:2 w lp ls 4 lw 1 ps 0.5 title '\tiny{TSPb}', \
     'data/insects.dat' every ::1:2:1:2 u 1:2 w lp ls 5 lc 'black' lw 1 ps 0.5 notitle

set lmargin screen 0.625
set rmargin screen 0.775

set title '\scriptsize{inc-abrupt-reocc}' offset 0,-0.75
plot .71 lw 3 lc rgb '#000000' dt 2 title '\tiny{LB}', \
     .75 lw 3 lc rgb '#000000' dt 1 title '\tiny{ARF}', \
     'data/insects.dat' every :::3::3 u 1:2 w lp ls 4 lw 1 ps 0.5 title '\tiny{TSPb}'

set lmargin screen 0.80
set rmargin screen 0.95

set title '\scriptsize{inc-reocc}' offset 0,-0.75
set y2tics 0.5, 0.1
plot .72 lw 3 lc rgb '#000000' dt 2 title '\tiny{LB}', \
     .77 lw 3 lc rgb '#000000' dt 1 title '\tiny{ARF}', \
     'data/insects.dat' every :::4::4 u 1:2 w lp ls 4 lw 1 ps 0.5 title '\tiny{TSPb}'

\end{gnuplot}
\caption{Classification performance on the insect stream benchmark of \citet{Souza2020}.}
\label{figConceptDriftInsects}
\end{center}
\end{figure}

Our relaxed segmented model reaches and outperforms the strongest competitors on all streams
from the benchmark, if the number of segments $K$ is large enough to accommodate
the concept drift present in the stream. Note that only the data stream with \emph{abrupt}
concept drift satisfies our modeling assumption of a piecewise stationary data generating process.
It consists of six segments with constant air temperatures within each segment.
The \emph{incremental-gradual} stream has three segments, with mildly varying temperatures
in the outer segments
and mixed temperatures
in the inner segment.
The black boxes in their plots show the performances achieved with our approach
for $K=6$ and $K=3$ segments, respectively. The results indicate that these are in fact
the minimum numbers of segments required to obtain competitive performance on these data streams.

\subsection{Discrete representation learning}
\label{secPhonemes}

At last, we apply our model on a speech signal to showcase its potential for discrete
representation learning on the level of phonemes.
We assume that the speech signal---represented by a sequence of 12-dimensional
MFCC vectors---is piecewise constant within a phoneme.
We model it by a minimal DGP with no covariates that simply copies the
parameter vectors to the output. See Appendix~\ref{secAppendixPhonemes} for
a complete model description.
We fit the model to a single utterance (``choreographer'', 10~phonemes) from the TIMIT
corpus \citep{Garofolo1993}
by minimizing the mean squared error loss,
and obtain:

\begin{figure}[h]
\begin{center}
\begin{gnuplot}[terminal=cairolatex,terminaloptions={pdf size 5.0,0.55}]
# 5-class sequential BuPu
#set multiplot layout 2,2 columnsfirst
set multiplot layout 1,2 columnsfirst
unset key
set format x '\tiny{%g}'
set format y '\tiny{%g}'

unset colorbox
unset xtics
unset ytics
set border 0
load 'magma.pal'

# single row
set lmargin 0.5
set rmargin 0.5
set tmargin screen 0.8
set bmargin screen 0.0

# true segment boundaries
array cps[9] = [507.84415584, 1030.33766234, 1417.56883117, 1988.89350649, 3164.25974026, 3711.16883117, 3906.49350649, 4258.07792208, 5273.76623377]
do for [i=1:9] {
	set arrow from cps[i], graph 0 to cps[i], graph 1 lw 3 lc 'white' nohead front
}

set title '\tiny{observed sequence, with true phoneme boundaries}' offset 0,-1.0
plot 'data/timit-true.dat' matrix with image
set title '\tiny{predicted sequence, with true phoneme boundaries}' offset 0,-1.0
plot 'data/timit-pred.dat' matrix with image
\end{gnuplot}
\end{center}
\end{figure}

Although the simple DGP does not capture all dynamics of the speech signal, 7 out of 9 phoneme
boundaries were correctly identified, with a time tolerance of 20 ms.
This minimal experiment suggests that relaxed segmented models, when combined with more
powerful DGPs, may be useful for
discrete representation learning \citep{Rolfe2017,VanDenOord2017,Fortuin2019},
in particular for learning
segmental embeddings \citep{Kong2016,Wang2018,Chorowski2019,Kreuk2020}.
In fact, our relaxed segmented model may be part of a larger model architecture, where the
covariates $z_t$ and the parameters $\theta_k$ come from some upstream computational layer, and
the outputs $x_t$ are passed on to the next computational layer with an arbitrary
downstream loss function.
The interpretation of $z_t$ as \emph{covariates} and $\theta_k$ as \emph{parameters} is
merely for consistency with prior work on segmented models.
It is more accurate to interpret $z_t$ as \emph{temporal variables} that
differ for every time step $t$, and $\theta_k$ as \emph{segmental variables} that differ
for every segment $k$.
The DGP combines the information from both types of variables to produce an output
for every time step.

\section{Conclusion}

We have described a novel approach to learn segmented models for non-stationary sequential
data with discrete change points. Our relaxed segmented model formulation is highly versatile
and can use any family of warping functions to approximate a hard segmentation function.
If the family of warping functions is differentiable, our model can be trained
with gradient descent.
A key advantage of a differentiable model formulation is that it enables the integration
of state-of-the-art learning architectures within segmented models.
We have introduced the novel family of TSP-based warping functions
designed specifically for the segmentation task: it is differentiable, contains
piecewise-constant functions that exactly represent segmentation functions, its parameters
directly correspond to segment boundaries, and it is simple to evaluate computationally.
The experiments on diverse tasks demonstrate the modeling capacities of our approach.

\paragraph{Limitations and future work.}
The most immediate limitation of our relaxed segmented model that it shares with classical
segmented models \citep{Bai2003,Muggeo2003} is that the number of segments $K$ is a
hyperparameter and needs to be chosen prior to model fitting. This issue can be resolved
\emph{externally} with any model selection criterion \citep{Davis2006}. However, due to our
differentiable model formulation, future work may be able to solve the model selection task
\emph{internally} within a single objective function that is optimized with gradient descent,
e.g., using the differentiable architecture search approach of \citet{Liu2019a}.

Another limitation that we inherit from classical segmented models is that all $K$
segments are treated as distinct regimes with their own parameter vectors $\theta_k$.
For example, if the data-generating process switches back and forth multiple times
between only two distinct regimes, a segmented model has to learn the same parameter
vector every time the regime switches. During training, we can favor solutions with
reused segment parameters by adding a regularization term to the loss function that
penalizes a high rank of the segment parameter matrix $\theta = [\theta_1,...,\theta_K]$.
If we wanted to control the number of distinct regimes separately, we could learn a
codebook of $C < K$ distinct segment parameters and a mapping from segment identifiers
to codebook entries $f : \{1,...,K\} \rightarrow \{1,...,C\}$, e.g., using the vector
quantization approach of \citet{VanDenOord2017}. This is possible only because our model
is fully differentiable and can easily be augmented with such components.

At last, we consider the application of our approach for discrete representation
learning under powerful deep data-generating processes sketched in
Section~\ref{secPhonemes} a fruitful direction for future work.

\bibliography{ms}
\bibliographystyle{iclr2021_conference}

\clearpage
\appendix

\section{Warping functions for segmentation}
\label{secAppendixExactWarp}

In this section, we provide a technical notion of representation of segmentation functions
and show that TSP-based warping functions can represent segmentation functions in this
sense.
\begin{definition}
The warping function $\gamma : [0,1] \longrightarrow [0,1]$ \textbf{exactly represents} the
segmentation function $\zeta : \{1,...,T\} \longrightarrow \{1,...,K\}$ with respect to the
unit grid $(u_t)_{t=1}^{T}$ if
\begin{equation}
\label{eqnExactPredictor}
\gamma(u_t) = \frac{k-1}{K-1} \Leftrightarrow \zeta(t)=k \quad \text{for all } t.
\end{equation}
\end{definition}
Exact representation entails that the predictors $\hat{\zeta}_t$ defined in
Equation~\ref{eqnPredictorSegmentation} satisfy $\hat{\zeta}_t = \zeta(t)$ for all $t$.
Clearly, exact representation can only be achieved with piecewise-constant warping functions.
We have the following result:
\begin{lemma}
\label{lemExactRepresentation}
For every segmentation function $\zeta$, there is a TSP-based warping function
$\gamma_\text{TSP}$ that exactly represents $\zeta$.
\end{lemma}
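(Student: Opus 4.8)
The plan is to explicitly construct the modes $m = (m_1, \dots, m_{K-1})$ (together with a sufficiently small width $w$ and any power $n > 1$) so that the resulting step function $\gamma_\text{TSP}(\cdot\,; m)$ takes the value $\frac{\zeta(t)-1}{K-1}$ at every grid point $u_t$. Since $k \mapsto \frac{k-1}{K-1}$ is injective, matching these values already delivers the forward implication $\zeta(t)=k \Rightarrow \gamma_\text{TSP}(u_t)=\frac{k-1}{K-1}$, and injectivity supplies the converse automatically; so it suffices to hit the target values on the grid.

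First I would reformulate the goal as a window-placement problem. By the plateau property stated above, $\gamma_\text{TSP}$ equals $\frac{j}{K-1}$ at any point lying to the right of the first $j$ non-overlapping component windows and to the left of the remaining ones. Hence $\gamma_\text{TSP}(u_t)=\frac{\zeta(t)-1}{K-1}$ is equivalent to requiring, for each $t$, that the windows with index $j < \zeta(t)$ lie entirely left of $u_t$, those with index $j \ge \zeta(t)$ lie entirely right of $u_t$, and $u_t$ lies in no window. Writing $R_j := \max\{u_t : \zeta(t) \le j\}$ and $L_j := \min\{u_t : \zeta(t) \ge j+1\}$, it then suffices to place the $j$-th window strictly inside the open interval $(R_j, L_j)$. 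The boundary constraints $\zeta(1)=1$ and $\zeta(T)=K$ make both sets non-empty, and monotonicity of $\zeta$ forces $R_j < L_j$, so each target interval is non-degenerate.

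Next I would carry out the placement, grouping windows by the gap they must occupy. Every consecutive pair $(u_t, u_{t+1})$ with $d_t := \zeta(t+1)-\zeta(t) \ge 1$ must hold the $d_t$ windows indexed $\zeta(t), \dots, \zeta(t+1)-1$, all sharing $R_j = u_t$ and $L_j = u_{t+1}$. Consecutive grid points are spaced $\tfrac{1}{T-1}$ apart, and at most $K-1$ windows can ever fall into a single gap, so choosing any width $w$ with $(K-1)\,w < \tfrac{1}{T-1}$ lets me fit all required windows non-overlapping and strictly inside their gap, with strictly increasing modes $m_1 < \dots < m_{K-1}$ kept in $(w/2,\,1-w/2)$ so that no support is clamped by the three-parameter reparametrization. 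Reading off the plateau value at each $u_t$ then gives $\gamma_\text{TSP}(u_t)=\frac{\zeta(t)-1}{K-1}$, which is exactly the representation property.

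The main obstacle is the bookkeeping for \emph{empty} segments, i.e.\ values skipped by $\zeta$ (permitted since $\zeta$ need only be weakly monotone): there a single gap must hold several windows at once, which is precisely why I phrase the argument through the intervals $(R_j, L_j)$ rather than a naive one-mode-per-change-point assignment, and why $w$ must shrink with both $K$ and $T$. The remaining verifications are routine: that the clamped formula for $a,b$ reduces to the symmetric support $[m-\tfrac{w}{2},\, m+\tfrac{w}{2}]$ whenever $w/2 < m < 1-w/2$, and that every $n>1$ leaves each component cdf equal to $0$ at $a$ and $1$ at $b$, so each completed window contributes exactly $\tfrac{1}{K-1}$ to $\gamma_\text{TSP}$.
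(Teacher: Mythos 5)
Your construction is correct and follows essentially the same route as the paper's proof: place each TSP window inside the single-grid-step gap where the corresponding change occurs, with a width small enough that no window covers a grid point, so that each component cdf evaluates to exactly $0$ or $1$ at every $u_t$. The only substantive difference is that you shrink the width to $(K-1)\,w < \tfrac{1}{T-1}$ and bookkeep via the intervals $(R_j, L_j)$ so as to handle segmentations that skip values of $k$ (several windows sharing one gap), whereas the paper fixes $w = 1/(T-1)$ and places one mode per change point $\tau_k$ with $\zeta(\tau_k-1)=k$ and $\zeta(\tau_k)=k+1$, which implicitly assumes every segment is non-empty; your version is therefore marginally more general but not a different argument.
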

\begin{proof}
We place the $K-1$ modes $m_k$ of $\gamma_\text{TSP}$ on the change points $\tau_k$
(projected to the unit grid) and choose a window size $w$ not larger than the resolution
of the grid. The power $n>1$ can be chosen freely.
Formally, let $\tau_k \in \{2,...,T\}$ be the $k$-th change point, such that
$\zeta(\tau_k-1) = k$ and $\zeta(\tau_k) = k+1$. We set $m_k := (u_{\tau_k}+u_{\tau_k-1})/2$
for all $k = 1,...,K-1$, and $w := 1/(T-1)$.
\end{proof}

In practice, the segmentation function $\zeta$ is unknown and the modes
$m=(m_1, ..., m_{K-1})$ must be estimated in an unsupervised way.
For effective training with gradient descent, the window size should initially be \emph{larger}
than the sampling resolution of the unit grid, $w > 1/(T-1)$, to allow the loss to
backpropagate across segment boundaries. The window size can be interpreted as the
\emph{receptive field} of the individual TSP components.
The width can be tapered down to $w \le 1/(T-1)$ over the training epochs to obtain a
warping function that exactly represents a segmentation function.
For simplicity, we do not follow this strategy in our experiments, but instead round the
predictors $\hat{\zeta}_t$ to the nearest integers, whenever we need hard segmentations.

\section{Details on the experiments}

\subsection{Poisson regression}
\label{secAppendixCOVID}

\paragraph{Data.}
We obtained official data on COVID-19 cases in Germany from Robert Koch Institute (RKI),
the German public health institute. The data is publicly
available under an open data license.\footnote{``Fallzahlen in Deutschland'' by Robert Koch
Institut (RKI), open data license ``Data licence Germany -- attribution -- Version 2.0''
(dl-de/by-2-0), link:
\url{https://www.arcgis.com/home/item.html?id=f10774f1c63e40168479a1feb6c7ca74}}
We provide a copy used for the experiments in this work in the supplementary material.
For every day in the study period, we aggregate all new cases \emph{reported} on that day.
Due to the delays between the actual infection of a patient and the time the infection is
reported to the health authorities, the new cases reported on a specific day contain
new infections from several days before.

\paragraph{Segmented Poisson regression model.}
Poisson regression is a GLM for count data, where the data generating process is modeled
by a Poisson distribution and the linear predictor is transformed with the logarithmic link
function \citep{McCullagh1989}.

Let $x_t$ denote the number of newly reported cases at time $t$.
Furthermore, let $z^{\text{Tu}}_t$, $z^{\text{We}}_t$, $z^{\text{Th}}_t$,
$z^{\text{Fr}}_t$, $z^{\text{Sa}}_t$ and $z^{\text{Su}}_t$ denote binary
day-of-week indicators. If the time step $t$ is a Monday, all indicators
are 0. For all other days, exactly one indicator is set to 1.
We use the following vector of covariates, including a bogus covariate 1
for the bias terms:
\begin{align}
z_t = [1,
       t,
       z^{\text{Tu}}_t,
       z^{\text{We}}_t,
       z^{\text{Th}}_t,
       z^{\text{Fr}}_t,
       z^{\text{Sa}}_t,
       z^{\text{Su}}_t]
\end{align}
In our segmented Poisson regression model, the bias terms and the daily growth
rates (the parameters associated with the covariates $1$ and $t$) differ
in every segment, while the parameters for the day-of-week indicators are
tied across all segments (they are independent of segment $k$):
\begin{align}
\theta_k = [\theta_{k,1},
            \theta_{k,2},
            \theta^{\text{Tu}},
            \theta^{\text{We}},
            \theta^{\text{Th}},
            \theta^{\text{Fr}},
            \theta^{\text{Sa}},
            \theta^{\text{Su}}]
\end{align}
In this model, the bias terms $\theta_{k,1}$ control the base rates of newly reported
cases \emph{on a Monday} within every segment $k$, while the parameters for the
day-of-week indicators are global scaling factors that control the relative increase
or decrease of the base rates on the other days-of-week \emph{with respect to Monday}.
Overall, we have the following segmented Poisson regression model:
\begin{align}
\hat{x}_t = \expectation[x_t \mid z_t] := \exp\left(
		\theta_{k,1} + \theta_{k,2} \cdot t
		+ \sum_{\text{day} \in \{\text{Tu},\text{We},\text{Th},\text{Fr},\text{Sa},\text{Su}\}}
						\theta^{\text{day}} z^{\text{day}}_t \right),
	\text{ if } \zeta(t)=k
\end{align}
The training objective is to minimize the negative log-likelihood under a Poisson distribution:
\begin{align}
\mathcal{L}(\zeta, \theta) = -\sum_t \log \operatorname{Poisson}(x_t \mid \hat{x}_t)
\end{align}

In our TSP-based warping functions (TSPb), we set the window size to $w=.5$ and the power to $n=16$.
In the CPA-based warping functions (CPAb) of \citet{Weber2019a}, we set the dimensionality of the
underlying velocity fields to $K$, the number of segments to learn.
This choice makes the function family flexible enough to approximate warping functions with
$K-1$ discrete steps, using the minimum number of parameters necessary.
Note that the ``nonparametric'' warping functions (NP) of \citet{Lohit2019} have $T-1$ parameters,
where $T$ is the sequence length.
We perform training with \textsc{Adam} with a learning rate of $\eta=0.01$ for a total of
10000 training epochs. In the last 2048 epochs, we round the predictors $\hat{\zeta}_t$ to
the nearest integers to obtain a hard segmentation function $\zeta(t)$.
We perform 10 restarts of the training procedure with random initialization and keep the
model with the best fit for evaluation (highest log-likelihood).

\paragraph{Competitor.}
We use the reference implementation of \citet{Muggeo2003} from the R \texttt{segmented} package.
As pointed out in Section~\ref{secIntro}, our segmented model architecture learns a segmentation
of the \emph{index set} $t=1,...,T$. The segmented models by \citet{Muggeo2003} learn a segmentation
in the \emph{domain of one (or more) of the covariates}. Since we use the index $t$ as
a covariate, we can configure the algorithm of \citet{Muggeo2003} to segment the
covariate $t$, which makes the two models equivalent.

\begin{figure}[tbp]
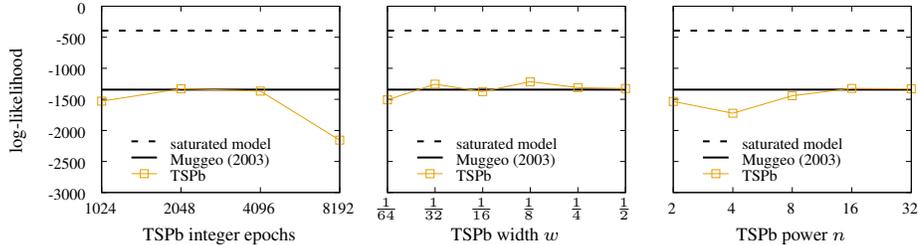

\begin{center}
\begin{gnuplot}[terminal=cairolatex,terminaloptions={pdf size 5.0,1.25}]
set multiplot layout 1,2
set tics scale 0.5
set format y '\tiny{%g}'

set tmargin screen 0.99
set bmargin screen 0.20

set ylabel '\scriptsize{log-likelihood}' offset 3
set key samplen 1 spacing 0.6 bottom left Left reverse
set format x '\tiny{%g}'
set xtics offset 0,0.5
set yrange [-3000:0]

set lmargin screen 0.10
set rmargin screen 0.35

set xlabel '\scriptsize{TSPb integer epochs}' offset 0,1
set logscale x 2
plot -393.7035  lw 3 lc rgb '#000000' dt 2 title '\tiny{saturated model}', \
     -1343.1085 lw 3 lc rgb '#000000' dt 1 title '\tiny{Muggeo (2003)}', \
     'data/covid19-by-epochs-hard.dat' every :::0::0 u 1:2 w lp ls 4 lw 1 ps 0.5 title '\tiny{TSPb}'

set lmargin screen 0.40
set rmargin screen 0.65

set xlabel '\scriptsize{TSPb width $w$}'
set format x '\tiny{%g}'
set xtics ('\tiny{$\frac{1}{2}$}' 1./2., '\tiny{$\frac{1}{4}$}' 1./4., '\tiny{$\frac{1}{8}$}' 1./8., '\tiny{$\frac{1}{16}$}' 1./16., '\tiny{$\frac{1}{32}$}' 1./32., '\tiny{$\frac{1}{64}$}' 1./64., '\tiny{$\frac{1}{128}$}' 1./128., )
unset ylabel
set format y ''
plot -393.7035 lw 3 lc rgb '#000000' dt 2 title '\tiny{saturated model}', \
     -1343.1085 lw 3 lc rgb '#000000' dt 1 title '\tiny{Muggeo (2003)}', \
     'data/covid19-by-w.dat' u 1:2 w lp ls 4 lw 1 ps 0.5 title '\tiny{TSPb}'

set lmargin screen 0.70
set rmargin screen 0.95

set xlabel '\scriptsize{TSPb power $n$}'
set xtics autofreq
set format x '\tiny{%g}'
plot -393.7035 lw 3 lc rgb '#000000' dt 2 title '\tiny{saturated model}', \
     -1343.1085 lw 3 lc rgb '#000000' dt 1 title '\tiny{Muggeo (2003)}', \
     'data/covid19-by-n.dat' u 1:2 w lp ls 4 lw 1 ps 0.5 title '\tiny{TSPb}'

\end{gnuplot}
\caption{Ablation study for TSPb hyperparameters (segmented Poisson regression);
default values for the fixed hyperparameters are: 2048 integer epochs, $w=.5$, and $n=16$,
respectively.}
\label{figCOVID19ablation}
\end{center}
\end{figure}

\paragraph{Ablation study.}
Figure~\ref{figCOVID19ablation} shows the goodness-of-fit (log-likelihood, the higher the better)
obtained with our relaxed segmented model (TSPb) using different settings of the hyperparameters.
The performance is quite robust with respect to the width and power of the TSP components.
The number of integer epochs (with hard segmentations) has impact on the estimation performance.
We hypothesize that the learning problem is simpler with soft segmentations, so that gradient
descent moves towards a better region of the loss function.
During integer epochs, the parameters within a segment are fine-tuned within the region found
during the soft epochs.

\subsection{Change point detection}
\label{secAppendixChangePoints}

\paragraph{Data generating process.}
We follow scenario~1 of \citet{Arlot2019} to sample $N=500$ random sequences of length $T=1000$
with a total of 10 change points ($K=11$ segments).
The change points are located at time steps 100, 130, 220, 320, 370, 520, 620, 740, 790, and 870.
We define a change point as the \emph{beginning} of a new segment.
The data generating process is described in Algorithm~\ref{algChangePointsDGP}, where
$\mathcal{P}$ is a set of predefined probability distributions. For scenario~1, we have
\begin{align}
\begin{split}
\mathcal{P} := \{
& \operatorname{Binom}(n=10, p=0.2), \operatorname{NegBinom}(n=3, p=0.7), \\
& \operatorname{Hypergeom}(M=10, n=5, N=2),
	\operatorname{\mathcal{N}}(\mu=2.5, \sigma^2=0.25), \\
& \operatorname{\Gamma}(a=0.5, b=5), \operatorname{Weibull}(a=2, b=5),
	\operatorname{Pareto}(a=3, b=1.5)
\},
\end{split}
\end{align}
where $a$ is the shape parameter and $b$ is the scale parameter in the case of
$\Gamma$, $\operatorname{Weibull}$, and $\operatorname{Pareto}$.
For every algorithm in the evaluation, we create a new sample of $500$ sequences.

\begin{algorithm}[tbp]
\caption{Data generating process of \citet{Arlot2019}.}
\label{algChangePointsDGP}
\begin{algorithmic}
\For{all sequences $n = 1, ..., N$}
	\For{all segments $k = 1, ..., K$}
		\State sample a distribution $p_{nk}$ uniformly from $\mathcal{P} \setminus \{p_{n,k-1}\}$
		\State sample $x_{nt} \iid p_{nk}$ for all time steps $t$ in segment $k$
	\EndFor
\EndFor
\end{algorithmic}
\end{algorithm}

\paragraph{Segmented normal model.}

We employ our relaxed segmented model with the assumption that the data generating process
within each segment is a \emph{normal distribution} with its own mean and variance,
\begin{align}
x_t \iid \mathcal{N}(\mu_k, \sigma_k^2),\quad\text{if } \zeta(t)=k.
\end{align}
With this design choice, we can detect changes in the mean and variance between the
segments, but no other distributional characteristics.
The training objective is to minimize the negative log-likelihood:
\begin{align}
\mathcal{L}(\zeta, \theta) = -\sum_t \log \mathcal{N}(x_t \mid \mu_{\zeta(t)}, \sigma_{\zeta(t)}^2)
\end{align}
We ensure a positive variance throughout training by estimating the logarithm of
the variance, i.e., the parameter vector within a segment $k$ is given by
$\theta_k = [\mu_k, \log\left(\sigma_k^2\right)]$.

For TSPb, we set the window size to $w=.125$ and the power to $n=16$.
For CPAb, we set the dimensionality of the velocity field to $K$.
NP has no hyperparameters.
We perform training with \textsc{Adam} with a learning rate of $\eta=0.1$ for a total of
300 epochs with 100 integer epochs.
We perform 10 restarts with random initialization and keep the model with the
best fit for evaluation.

\paragraph{Competitors.}
For the experiments with the dynamic programming approaches (DP${}\ge\ell$), we use
the implementation of \citet{Truong2020} from the Python \texttt{ruptures}
module, using the normal cost function with a minimum segment size of $\ell$,
10 change points, and no subsampling.
For the binary segmentation approach (BinSeg) by \citet{Scott1974} and the Pruned Exact
Linear Time (PELT) method of \citet{Killick2012}, we use the R \texttt{changepoint}
package (\texttt{cpt.meanvar}, normal test statistic, MBIC penalty \citep{Zhang2007},
minimum segment size 2, maximum number of change points 100).
The experiments with the E-divisive approach \citep{Matteson2014} were conducted with
the R \texttt{ecp} package (\texttt{e.divisive}, significance level .05, 199 random
permutations, minimum, minimum segment size 2, moment index 1).
For the experiments with the kernel-based approaches KCP \citep{Arlot2019} and
KCpE \citep{Harchaoui2007}, we use the implementation from the Python \texttt{chapydette}
package kindly provided by \citet{Jones2020}, with the default Gaussian-Euclidean kernel,
bandwidth 0.1, and minimum segment size 2.

\paragraph{Performance measures.}

We use the same evaluation measures as \citet{Arlot2019} and follow their definitions.
Let $\zeta$ and $\zeta'$ be two segmentations of a sequence of length $T$.
Let $\tau$ and $\tau'$ be the corresponding sets of change points.
The Hausdorff distance is the largest distance between any change point from one
segmentation and its nearest neighbor from the other segmentation:
\begin{align}
d_\text{hdf}(\zeta, \zeta') := \max \left\{
	\max_{1 \le i \le |\tau|} \min_{1 \le j \le |\tau'|} |\tau_i - \tau_j'|,
	\max_{1 \le j \le |\tau|} \min_{1 \le i \le |\tau'|} |\tau_i - \tau_j'|,
\right\}
\end{align}
The Frobenius distance between two segmentations \citep{Lajugie2014} is defined as
the Frobenius distance between the \emph{rescaled equivalence matrix} representations
of the segmentations:
\begin{gather}
d_\text{fro}(\zeta, \zeta') := \| M^\zeta - M^{\zeta'} \|_F, \\
\text{where} \quad M_{t,t'}^\zeta
    = \frac{\bm{1}_{\zeta(t) = \zeta(t')}}
           {\sum_{t''} \bm{1}_{\zeta(t) = \zeta(t'')}}.
\end{gather}
It penalizes over-segmentation more strongly than the Hausdorff distance.

\paragraph{Ablation study.}
We experimented with different values for the TSPb hyperparameters;
Table~\ref{tblChangePointsAblationStudy} shows the results.
The detection performance is robust towards the choice of hyperparameters.

\begin{table}[tbp]
\caption{Ablation study for TSPb hyperparameters (change detection).}
\label{tblChangePointsAblationStudy}
\begin{center}
\scriptsize
\begin{tabular}{c c c c}
\toprule
power $n$ & width $w$ & $d_\text{hdf}$ (mean$\pm$std) & $d_\text{fro}$ (mean$\pm$std) \\
\midrule
16        & 0.5       & $78.8 \pm 31.9$ & $2.4 \pm 0.3$ \\
16        & 0.25      & $83.3 \pm 35.1$ & $2.3 \pm 0.3$ \\
16        & 0.125     & $82.5 \pm 32.3$ & $2.3 \pm 0.3$ \\
16        & 0.0625    & $85.0 \pm 36.1$ & $2.4 \pm 0.3$ \\
\midrule
4         & 0.125     & $83.0 \pm 34.4$ & $2.4 \pm 0.3$ \\
8         & 0.125     & $79.5 \pm 31.3$ & $2.3 \pm 0.4$ \\
16        & 0.125     & $82.5 \pm 32.3$ & $2.3 \pm 0.3$ \\
32        & 0.125     & $80.4 \pm 30.1$ & $2.3 \pm 0.3$ \\
\bottomrule
\end{tabular}
\end{center}
\end{table}

\subsection{Concept drift}
\label{secAppendixConceptDrift}

\paragraph{Benchmark data.} The insect stream benchmark is described in detail
in \citet{Souza2020}. The five data streams with balanced class distributions that we
consider here have the following lengths.
\emph{incremental}: 57,018;
\emph{abrupt}: 52,848;
\emph{incremental-gradual}: 24,150;
\emph{incremental-abrupt-reoccurring}: 79,986;
\emph{incremental-reoccurring}: 79,986 (sic).

\paragraph{Softmax regression model.}
Softmax regression is a standard multi-class classification model, where the data generating process
is modeled by a categorical distribution with probabilities computed from the softmax function.
Let $(x_t, y_t)$ denote a training instance from the insect stream benchmark, where
$x_t \in \{1,...,C\}$ is the target class label and $y_t$ is the raw observation.
We learn a more informative representation $z_t$ of the observation $y_t$ by passing it through
a linear layer with output dimension $D = 8$, followed by a ReLU nonlinearity.
We denote this feature transformation by $z_t := f_\phi(y_t)$, where $\phi$ are the learnable
parameters of the transformation.
The covariates $z_t$ are used within a segmented softmax regression model to predict
the targets $x_t$,
\begin{align}
x_t \mid z_t \iid \operatorname{Categorical}(\operatorname{Softmax}(\theta_k z_t)),
	\quad \text{if } \zeta(t)=k.
\end{align}
In this model, $\theta_k \in \mathbb{R}^{C \times D}$ is a matrix, so that
$\theta_k z_t \in \mathbb{R}^C$ contains unnormalized classification scores for every class $c$
in segment $k$.
The softmax function transforms the scores into normalized class probabilities.
The feature transformation $f_\phi$ is shared across all segments, while the parameters of the
linear predictor $\theta$ change from segment to segment.
With this design, we learn a feature transformation that is useful for the classification task
in general, while taking concept drift in the label associations into account.
The training objective is to minimize the negative log-likelihood under the categorical distribution,
more commonly known as the cross-entropy loss
\begin{align}
\mathcal{L}(\zeta, \theta, \phi)
	&{}= - \sum_t \log \operatorname{Categorical}(x_t \mid
			\operatorname{Softmax}(\theta_{\zeta(t)} z_t)) \\
	&{}= - \sum_t \left(\left[\theta_{\zeta(t)} z_t\right]_{x_t}
				- \log\sum_{c}\exp\left[\theta_{\zeta(t)} z_t\right]_{c}\right).
\end{align}

We employ TSPb warping functions with window size $w=.125$ and power $n=16$.
We perform training with \textsc{Adam} with a learning rate of $\eta=.1$
for a total of 300 epochs, with 100 integer epochs.
We perform 10 restarts with random initialization and keep the model with the best fit
for evaluation.

\subsection{Discrete representation learning}
\label{secAppendixPhonemes}

\paragraph{Piecewise constant model.}
We assume the piecewise constant, deterministic DGP
\begin{align}
x_t := \theta_k, \quad \text{if } \zeta(t)=k,
\end{align}
and minimize the mean squared error of the output
\begin{align}
\mathcal{L}(\zeta, \theta) = \sum_t \|x_t - \theta_k \|^2.
\end{align}
We fit our relaxed segmented model with TSPb warping functions ($K=10$ segments) with
window size $w=.125$ and power $n=16$. We perform training with \textsc{Adam} with a
learning rate of $\eta=.1$ for a total of 300 epochs with 100 integer epochs. We perform
10 restarts with random initialization and keep the model with the best fit for evaluation.

\end{document}